\documentclass{article}
\usepackage{amsmath,graphicx,mlspconf}
\usepackage[caption=false]{subfig}
%





\def\Pscr{\mathcal{P}} 
\def\Wscr{\mathcal{W}} 
\def\Zscr{\mathcal{Z}}
\def\Lh{\widehat{L}}
\def\EUBO{\mathrm{EUBO}}
\def\ERM{\mathrm{ERM}}
\def\MAP{\mathrm{MAP}}
\def\Ebb{\mathbb{E}}
\def\KL{\mathrm{KL}}
\def\unl{\mathrm{un-l}}
\def\Ph{\widehat{P}}
\def \AVU{\mathrm{AVU}}

\usepackage{amsmath, amssymb, bbm, xspace}
\usepackage{epsfig}
\usepackage{longtable}
\usepackage{color}
\usepackage{mathrsfs}
\usepackage{comment}

\usepackage{courier}



\newtheorem{theorem}{Theorem}[section]

\newtheorem{lemma}{Lemma}[section]

\newtheorem{proposition}[theorem]{Proposition}

\newtheorem{corollary}[theorem]{Corollary}

\newtheorem{definition}{Definition}[section]

%

\def\bkE{{\rm I\kern-.17em E}}
\def\bk1{{\rm 1\kern-.17em l}}
\def\bkD{{\rm I\kern-.17em D}}
\def\bkR{{\rm I\kern-.17em R}}
\def\bkP{{\rm I\kern-.17em P}}

\def\bkZ{{\bf{Z}}}

\def\bkE{{\rm I\kern-.17em E}}
\def\bk1{{\rm 1\kern-.17em l}}
\def\bkD{{\rm I\kern-.17em D}}
\def\bkR{{\rm I\kern-.17em R}}
\def\bkP{{\rm I\kern-.17em P}}

\makeatletter
\newcommand{\pushright}[1]{\ifmeasuring@#1\else\omit\hfill$\displaystyle#1$\fi\ignorespaces}
\newcommand{\pushleft}[1]{\ifmeasuring@#1\else\omit$\displaystyle#1$\hfill\fi\ignorespaces}
\makeatother


\def\bkZ{{\bf{Z}}}
\def\b12{(\beta_1,\beta_2)}

\newcounter{example}
\renewcommand{\theexample}{\thesection.\arabic{example}}

\newcounter{remark}
\renewcommand{\theremark}{\thesection.\arabic{remark}}

\def\Ebb{\mathbb{E}}
\newlength{\noteWidth}
\setlength{\noteWidth}{.75in}
\long\def\notes#1{\ifinner
{\tiny #1}
\else
\marginpar{\parbox[t]{\noteWidth}{\raggedright\tiny #1}}
\fi\typeout{#1}}

 \def\notes#1{\typeout{read notes: #1}} 



\newcommand{\ie}{i.e.\@\xspace} 



\newcommand{\Real}{\ensuremath{\mathbb{R}}}

\def\Ebb{\mathbb{E}}

\def\exp{\mathop{\hbox{\rm exp}}}

\def\spose#1{\hbox to 0pt{#1\hss}}

\def\text #1{\hbox{\quad#1\quad}}


\def\nthinsp{\mskip -2   mu}




\def\superstar{^{\raise 0.5pt\hbox{$\nthinsp *$}}}
\def\SUPERSTAR{^{\raise 0.5pt\hbox{$*$}}}

\def\lamstarT {\lambda^{\raise 0.5pt\hbox{$\nthinsp *$}T}}



\def\Lscr{{\cal L}}

\def\Pscr{{\cal P}}
\def\Qscr{{\cal Q}}
\def\Sscr{{\cal S}}

\def\Wscr{{\cal W}}

\def\Nscr{{\cal N}}

\def\Zscr{{\cal Z}}

\def\Kscr{{\cal K}}

\def\non{\nonumber}

\let\forallnew\forall
\renewcommand{\forall}{\forallnew\ }
\let\forall\forallnew

		\def\bkE{{\rm I\kern-.17em E}}
		\def\bk1{{\rm 1\kern-.17em l}}
		\def\bkD{{\rm I\kern-.17em D}}
		\def\bkR{{\rm I\kern-.17em R}}
		\def\bkP{{\rm I\kern-.17em P}}
		\def\bkY{{\bf \kern-.17em Y}}
		\def\bkZ{{\bf \kern-.17em Z}}
		\def\bkC{{\bf  \kern-.17em C}}


%
{\begin{list}{}%
         {\setlength{\leftmargin}{#1}}%
         \item[]%
}
{\end{list}}

		\def\bsp{\begin{split}}
		\def\beq{\begin{eqnarray}}
		\def\bal{\begin{align*}}
		\def\bc{\begin{center}}
		\def\be{\begin{enumerate}}
		\def\bi{\begin{itemize}}
		\def\bs{\begin{small}}
		\def\bS{\begin{slide}}
		\def\ec{\end{center}}
		\def\ee{\end{enumerate}}
		\def\ei{\end{itemize}}
		\def\es{\end{small}}
		\def\eS{\end{slide}}
		\def\eeq{\end{eqnarray}}
		\def\eal{\end{align*}}
		\def\esp{\end{split}}
		\def\qed{ \vrule height7.5pt width7.5pt depth0pt}  

	\def\cp2problem#1#2#3#4{\fbox
		 {\begin{tabular*}{0.9\textwidth}
			{@{}l@{\extracolsep{\fill}}l@{\extracolsep{6pt}}l@{\extracolsep{\fill}}c@{}}
				#1 & & $#4 $ 
			\end{tabular*}}}

		\def\bkE{{\rm I\kern-.17em E}}
		\def\bk1{{\rm 1\kern-.17em l}}
		\def\bkD{{\rm I\kern-.17em D}}
		\def\bkR{{\rm I\kern-.17em R}}
		\def\bkP{{\rm I\kern-.17em P}}
		
		\def\bkZ{{\bf{Z}}}

\newcommand {\beeq}[1]{\begin{equation}\label{#1}}
\newcommand {\eeeq}{\end{equation}}
\newcommand {\bea}{\begin{eqnarray}}
\newcommand {\eea}{\end{eqnarray}}

\def\texitem#1{\par\smallskip\noindent\hangindent 25pt
               \hbox to 25pt {\hss #1 ~}\ignorespaces}



\def\bsp{\begin{split}}
		\def\beq{\begin{eqnarray}}
		\def\bal{\begin{align*}}
		\def\bc{\begin{center}}
		\def\be{\begin{enumerate}}
		\def\bi{\begin{itemize}}
		\def\bs{\begin{small}}
		\def\bS{\begin{slide}}
		\def\ec{\end{center}}
		\def\ee{\end{enumerate}}
		\def\ei{\end{itemize}}
		\def\es{\end{small}}
		\def\eS{\end{slide}}
		\def\eeq{\end{eqnarray}}
		\def\eal{\end{align*}}
		\def\esp{\end{split}}
		\def\qed{ \vrule height7.5pt width7.5pt depth0pt}  



\usepackage{amsmath, amssymb, xspace}
\usepackage{epsfig}
\usepackage{longtable}
\usepackage{color}
\usepackage{mathrsfs}
\usepackage{subfig}
\newenvironment{proof}[1][]{{\noindent \textit{ Proof}: }}{\hfill \qed \vspace{3pt}\\ }

\def\Nscr{{\cal N}}






\title{A Unified PAC-Bayesian Framework for Machine Unlearning via Information Risk Minimization}
%

\name{Sharu Theresa Jose, Osvaldo Simeone 
}
\address{Department of Engineering\\ King's College London\\London, WC2R 2LS}

%
%

\begin{document}

\maketitle

\begin{abstract}
Machine unlearning refers to mechanisms that can remove the influence of a subset of training data upon request from a  trained model without incurring the cost of re-training from scratch. This paper develops a unified PAC-Bayesian framework for machine unlearning that recovers the two recent design principles -- variational unlearning \cite{nguyen2020variational} and forgetting Lagrangian \cite{golatkar2020eternal}-- as information risk minimization problems \cite{zhang2006information}. Accordingly, both criteria can be interpreted as  PAC-Bayesian upper bounds on the test loss of the unlearned model that take the form of free energy metrics.
\end{abstract}
\begin{keywords}
Machine unlearning, PAC-Bayesian bounds, free energy minimization
\end{keywords}
\section{Introduction}
\label{sec:intro}
AI tools are increasingly widespread and subject to privacy attacks and data misuse. Recent regulations, such as the European Union's General Data Protection Regulation (GDPR) and the California Consumer Privacy Act, has enshrined in law the right for individuals to 
withdraw consent to the use of their personal data for training machine learning models. The mere deletion of the requested data from the training data set does not serve the purpose, as information about the deleted data can still be retrieved from already trained machine learning models  \cite{carlini2019secret}. Thus, data deletion necessitates the machine learning model to \textit{unlearn} the contribution of the deleted data to the training process, such that the resulting model behaves as if it has never observed the data in the first place.

A straightforward approach to unlearn is to retrain the model from scratch by using only data remaining after deletion of the data to be unlearnt. However, this is computationally intensive and resource expensive. 
 \textit{Machine unlearning} refers to mechanisms that can remove the influence of a specific  subset of the training data on a trained machine learning model, without incurring the cost of retraining from scratch \cite{cao2015towards}, \cite{ginart2019making}. 
 
 Several machine unlearning approaches have been studied since the introduction of the concept in \cite{cao2015towards}, where the problem was studied in the context of statistical query learning. \cite{bourtoule2019machine} proposes an unlearning approach that partitions the data set into \textit{shards} that are used to train multiple models in \textit{isolation} and finally \textit{aggregrated}. This allows unlearning to be carried out by aggregating only the remaining shards, avoiding the need for retraining.
 
 Our work is motivated by two recently proposed machine unlearning mechanisms. The first proposes a design criterion, termed Evidence Upper BOund (EUBO), for \textit{variational unlearning} within a Bayesian setting \cite{nguyen2020variational}, while  the second optimizes over a ``scrubbing function'' by minimizing a  \textit{forgetting Lagrangian} criterion \cite{golatkar2020eternal}. Although prima facie these two approaches seem different, we demonstrate that the two design principles can be interpreted in a unified manner in the context of PAC-Bayesian theory \cite{mcallester1999pac}, \cite{germain2009pac}. PAC-Bayesian theory develops high-probability upper bounds on the population loss of a learning algorithm in terms of a free energy metric that includes the sum of a training loss and the Kullback-Leibler (KL) divergence between the learning algorithm and a data-independent \textit{prior} distribution \cite{germain2009pac, jose2021free}.
 
The main contributions of the paper are summarized as follows.
      We develop a unified PAC-Bayesian framework for machine unlearning that explains the  unlearning design principles in \cite{nguyen2020variational} and \cite{golatkar2020eternal}
      through the principle of information risk minimization (IRM) \cite{zhang2006information}. The PAC-Bayesian formulation makes use of the recent result in \cite{rivasplata2020pac} that accounts for data-dependent priors.
      We show that the design criteria -- EUBO and forgetting Lagrangian -- optimize PAC-Bayesian bounds  with appropriate choices of training loss  and data-dependent prior. Finally, the proposed framework motivates the design of \textit{amortized} variants of variational unlearning and forgetting Lagrangian-based mechanisms, which are also described.
 
\section{Learning and Unlearning Algorithms}
In this section, we start by defining the operation and performance criteria of learning and unlearning algorithms. These are described as stochastic mappings as in the standard PAC-Bayes framework.
\subsection{Learning Algorithm}
Let $D=(Z_1,\hdots,Z_n)$ denote a training data set
 of $n$ samples generated i.i.d. according to an unknown population distribution $P_Z \in \Pscr(\Zscr)$. A learning algorithm uses the data set $D$ to infer a model parameter $W$ belonging to a model class $\Wscr$. We define the learning algorithm as a stochastic mapping, $P_{W|D} \in \Pscr(\Wscr)$\footnote{We use $\Pscr(\cdot)$ to denote the space of all probability distributions on $`\cdot$'.}, from the input training set $D$ to the model class, $\Wscr$. The probabilistic mapping $P_{W|D}$ describes a distribution over all possible outcomes $W$ in the model class $\Wscr$.

Let $\ell:\Wscr \times \Zscr \rightarrow \Real_{+}$ denote a loss function. The goal of the learning algorithm is to find a model parameter $w \in \Wscr$ that minimizes the \textit{population loss},
\begin{align}
L(w)=\Ebb_{P_Z}[\ell(w,Z)], \label{eq:populationloss}
\end{align}which is the average loss of the model parameter $w$ incurred on a new test data point $Z \sim P_Z$.
 The population loss \eqref{eq:populationloss} is unknown to the learner, since the underlying  population distribution $P_Z$ is not available. Instead, the learner uses the \textit{empirical training loss} on the data set $D$, \ie,
\begin{align}
\Lh(w|D)=\frac{1}{n}\sum_{i=1}^n \ell(w,Z_i) \label{eq:trainingloss}
\end{align} as the training criterion. For a given training data set $D$, we define the \textit{generalization error}, $\Delta \Lscr(P_{W|D})$, of a learning mechanism $P_{W|D}$ as the average difference between the population loss \eqref{eq:populationloss} and the training loss \eqref{eq:trainingloss}, \ie, \begin{align}\Delta \Lscr(P_{W|D})=\Ebb_{P_{W|D}}[L(W)-\Lh(W|D)]. \label{eq:generalizationerror}\end{align} The generalization error \eqref{eq:generalizationerror} quantifies the extent to which the training loss \eqref{eq:trainingloss} can be reliably used as a  proxy measure for the unknown population loss.
  \subsection{Machine Unlearning}
  Consider a model $W_l \sim P_{W|D}$ learned using the data set $D$. When a  request is received to ``delete" a subset $D_e \subset D$ of $m$ samples, the learned model $W_l$ must be updated so as to ``unlearn" the information extracted from the data set $D_e$ by the learning process. We refer to data set $D_e$ as the \textit{unlearning data set}. Ideally, this could be done by re-training from scratch by using the remaining data, $D_r=D \setminus D_e$, \ie, by applying the stochastic mapping $P_{W|D_r}$. Given the large computational cost of re-training, \textit{machine unlearning} aims to remove the influence of the data $D_e$ on the learned model $W_l$ without incurring the full cost of re-training from scratch. 
Formally, we define an unlearning algorithm as follows \cite{sekhari2021remember}.
 \begin{definition}[Unlearning Algorithm]\label{def:unlearning}
    An unlearning algorithm $P_{W|W_l,T(D),D_e}$  is a stochastic  mechanism that maps the learned model parameter $W_l \sim P_{W|D}$, a statistic $T(D)$ of data set $D$, and the unlearning data set $D_e$ to the space of model parameters $\Wscr$.
  \end{definition} 
 The rationale for making the unlearned model $W \sim P_{W|W_l,T(D),D_e}$ depend on a statistic $T(D)$ of $D$ is to rule out training from scratch. In fact, if the statistic is $T(D) = D$, the unlearning algorithm can ignore $W_l$ and re-train from scratch, while more restrictive choices of $T(D)$ make this impossible.
 
 In order to ensure successful unlearning, one needs to impose that the distribution of the unlearned model $W$ be close to that obtained by training from scratch. For fixed data sets $D$ and $D_e$, the latter distribution is $P_{W|D_r}$,  while the former is given by the average $\Ebb_{P_{W_l|D}}[P_{W|W_l,T(D),D_e}]$ over the learning mechanism. Note that the expectation marginalizes over the learned models. This constraint can be formalized as follows.
\begin{definition}[$\epsilon$-certified unlearning]\label{def:certificate}
 An unlearning algorithm $P_{W|W_l,T(D),D_e}$  is said to satisfy $\epsilon$-certified unlearning for $\epsilon>0$ if
 \begin{align}
 D_{\KL}(\Ebb_{W_l \sim P_{W|D}}[P_{W|W_l,T(D),D_e}]||P_{W|D_r})\leq \epsilon, \label{eq:epsilon_unlearning}
 \end{align}
 where $D_{\KL}(P||Q)$ denotes the KL divergence between distributions $P$ and $Q$.
 \end{definition}

By the biconvexity of the KL divergence, it is easy to see that the unlearning certificate in \eqref{eq:epsilon_unlearning} is implied by the stronger condition that the inequality
  \begin{align}
 D_{\KL}(P_{W|W_l,T(D),D_e}||P_{W|D_r})\leq \epsilon \label{eq:epsilon_unlearning_1}
 \end{align} applies for all $W_l \in \Wscr$ in the support of $P_{W|D}$.

 
 \section{Preliminaries}
 In this section, we briefly review the classical  PAC-Bayesian framework, which underlies the proposed unified approach to machine unlearning. PAC Bayesian theory \cite{mcallester1999pac,mcallester2003pac} provides upper bounds on the average population loss, $\Ebb_{P_{W|D}}[L(W)]$, of a learning algorithm $P_{W|D}$ in terms of: $(a)$ the average training loss, $\Ebb_{P_{W|D}}[\Lh(W|D)]$, and $(b)$ the KL divergence between the distribution $P_{W|D}$ and an arbitrary data-independent ``prior" $Q_W$. The PAC-Bayesian bounds hold with high probability over random draws of the training data set $D$. There has been extensive study on various refinements to the original PAC-Bayesian bound of \cite{mcallester1999pac} (see \cite{guedj2019primer} for a review).  More  recently, PAC-Bayesian bounds have been extended to account for \textit{data-dependent} priors \cite{dziugaite2018data}, \cite{rivasplata2020pac}. 
 
 In this work, we make use of the general PAC-Bayesian bound derived in Theorem~2 of \cite{rivasplata2020pac} that allows for data-dependent priors. This turns out to be important for unlearning, since the prior will be used to account for the learning algorithm. The next lemma restates Theorem~2 in \cite{rivasplata2020pac} by using our notation and by adopting a conventional formulation in terms of uniform bounds over all posteriors $P_{W|D}$. A proof is provided for completeness in Appendix~\ref{app:PACBayesian_new}.
\begin{lemma}\label{lem:PACBayesian_new}
Let $Q_{W|D}$ denote a data-dependent prior. For any (measurable) function $A : \Zscr^n \times \Wscr \rightarrow \Real^2$ and convex function
$F : \Real^2 \rightarrow \Real$, let $f :\Zscr^n \times \Wscr \rightarrow \Real$ be the composition of $F$ and $A$, and let $\xi = \Ebb_{P^{\otimes n}_Z}\Ebb_{Q_{W|D}}[\exp(f(D,W)) ]$. Then, with probability at least $1-\delta$, with $\delta \in (0,1)$, over the random draw of data set $D \sim P^{\otimes n}_Z$ , the following inequality holds uniformly over all stochastic mappings $P_{W|D}$
\begin{align}
 &F(\Ebb_{P_{W|D}}[A(D,W)])\non \\& \leq D_{\KL}(P_{W|D}||Q_{W|D}) + \log(\xi/\delta) . \label{eq:PAC-Bayesianbound}
\end{align}
\end{lemma}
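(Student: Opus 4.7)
The plan is to follow the standard three-ingredient recipe for PAC-Bayesian bounds, adapted to handle the data-dependent prior $Q_{W|D}$: Jensen's inequality applied to $F$, the Donsker--Varadhan variational characterization of the KL divergence, and Markov's inequality on an exponential moment. The key observation that makes a data-dependent prior permissible is that the Markov step only needs to control a scalar function of $D$, not anything involving $P_{W|D}$, so the ``uniformity over posteriors'' is preserved.

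First, by the convexity of $F$ and Jensen's inequality applied to the expectation over $W \sim P_{W|D}$, I would bound
\begin{align*}
F(\Ebb_{P_{W|D}}[A(D,W)]) \leq \Ebb_{P_{W|D}}[F(A(D,W))] = \Ebb_{P_{W|D}}[f(D,W)].
\end{align*}
Next, I would invoke the Donsker--Varadhan change-of-measure inequality between $P_{W|D}$ and the data-dependent prior $Q_{W|D}$, which gives, for every fixed $D$ and uniformly over all stochastic kernels $P_{W|D}$,
\begin{align*}
\Ebb_{P_{W|D}}[f(D,W)] \leq D_{\KL}(P_{W|D}\|Q_{W|D}) + \log \Ebb_{Q_{W|D}}[\exp(f(D,W))].
\end{align*}

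The final step is to control the data-dependent quantity $V(D) := \Ebb_{Q_{W|D}}[\exp(f(D,W))]$ with high probability. Since $V$ is nonnegative and $\Ebb_{P^{\otimes n}_Z}[V(D)] = \xi$ by the definition of $\xi$, Markov's inequality yields
\begin{align*}
\Pbb_{D\sim P^{\otimes n}_Z}\bigl(V(D) \geq \xi/\delta\bigr) \leq \delta,
\end{align*}
so that $\log V(D) \leq \log(\xi/\delta)$ with probability at least $1-\delta$. Chaining the three inequalities yields \eqref{eq:PAC-Bayesianbound}.

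The only subtle point, and the reason I would emphasize the order of quantifiers, is that the high-probability event $\{V(D) \leq \xi/\delta\}$ depends on $D$ alone and not on the choice of $P_{W|D}$. Consequently, on this single event of probability at least $1-\delta$, the Donsker--Varadhan bound can be applied pointwise in $P_{W|D}$, giving the claimed uniform-over-posteriors statement. I do not anticipate any genuine obstacle; the main thing to be careful about is that $Q_{W|D}$ is allowed to depend on $D$, which is exactly why we define $\xi$ as a joint expectation over $(D,W) \sim P^{\otimes n}_Z\, Q_{W|D}$ rather than as a supremum over $D$ of a conditional exponential moment.
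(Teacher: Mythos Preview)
Your proposal is correct and follows essentially the same approach as the paper: Markov's inequality applied to $V(D)=\Ebb_{Q_{W|D}}[\exp(f(D,W))]$, a change-of-measure/Donsker--Varadhan step, and Jensen's inequality for the convex $F$. The only cosmetic difference is the order of presentation---the paper applies Markov first and then the change of measure plus two Jensen steps, whereas you start with Jensen on $F$ and finish with Markov---but the ingredients and the uniformity-over-posteriors argument are identical.
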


In the rest of the paper, we will use Lemma~\ref{lem:PACBayesian_new} by selecting function $A(D,W)$ to output a tuple including the population loss $L(W)$ and a training loss metric to be specialized for different unlearning methods. Furthermore, the convex function $F$ will be chosen to output the difference of its inputs, \ie, $F(a,b)=a-b$. With these choices, the PAC-Bayesian bound in \eqref{eq:PAC-Bayesianbound} will allow us to relate the empirical training metrics and the unknown population loss.

For reference, in the standard analysis of learning algorithms, the function $A(D,W)$ is selected to be the two-dimensional vector $[\beta L(W),\beta \Lh(W|D)]$. With this choice, the bound in \eqref{eq:PAC-Bayesianbound} can be re-written as an upper bound on the population loss that holds for all $P_{W|D}$:
\begin{align}
   & \Ebb_{P_{W|D}}[L(W)] \leq \mathcal{F}_{\mathrm{IRM}}+ \frac{1}{\beta}\log(\xi/\delta), \quad \mbox{where}, \label{eq:PAC-bayesianbound_1}\\
    &\mathcal{F}_{\mathrm{IRM}}=\Ebb_{P_{W|D}}[\Lh(W|D)]+\frac{1}{\beta}D_{\KL}(P_{W|D}||Q_{W|D}).\non
\end{align}
Important to our framework is the observation that the PAC-Bayesian bound  \eqref{eq:PAC-Bayesianbound}, and hence also \eqref{eq:PAC-bayesianbound_1}, hold uniformly over all choices of the learning algorithm $P_{W|D}$. As such, one can optimize the right-hand side of \eqref{eq:PAC-bayesianbound_1} ove the learning algorithm $P_{W|D}$ by considering the problem $\min_{P_{W|D}}\mathcal{F}_{\mathrm{IRM}}$. By minimizing an upper bound on the population loss, the learning criterion \eqref{eq:PAC-bayesianbound_1} facilitates generalization. This approach is known as \textit{Information Risk Minimization (IRM)} \cite{zhang2006information}, and it amounts to the minimization of a \textit{free energy criterion} \cite{jose2021free}. A free energy criterion is given by the sum of a training loss and of an information-theoretic regularization.

The PAC-Bayesian bound in \eqref{eq:PAC-Bayesianbound} contains a constant term $\xi$, bounding which ensures non-vacuous bounds on the generalization error. For data-independent priors, under suitable assumptions on the loss function, such as boundedness or sub-Gaussianity, the constant $\xi$ can be easily upper bounded. An upper bound on $\xi$ for a data-dependent prior has been recently obtained in \cite{rivasplata2020pac}. Since we will use \eqref{eq:PAC-Bayesianbound} to justify unlearning criteria via variants of the IRM problem, we will not be further concerned with bounding $\xi$.

\section{Variational Unlearning}
In this section, we study the Bayesian unlearning framework introduced in the recent work \cite{nguyen2020variational}. As we first review, this paper presents a new unlearning criterion, termed \textit{Evidence Upper BOund} (EUBO), that enables variational unlearning. To be consistent with Definition~\ref{def:unlearning}, we specifically describe here an amortized variational unlearning variant of the approach proposed in \cite{nguyen2020variational}. We then show that the resulting unlearning algorithm can be interpreted as IRM, which is obtained through a specific instantiation of the PAC-Bayesian bound \eqref{eq:PAC-Bayesianbound}.

\subsection{Amortized Variational Unlearning}\label{sec:variationalunlearning}
In order to meet the unlearning requirement \eqref{eq:epsilon_unlearning} for some $\epsilon>0$, the variational unlearning framework proposed in \cite{nguyen2020variational} finds a distribution in the model parameter space $\Wscr$ that is closest, in terms of KL divergence, to the distribution $P_{W|D_r}$ resulting from re-training on the remaining data $D_r$. Optimization is restricted to a given family of distributions.

The approach requires the variational optimization to be carried out separately for any given selection of data sets $D$ and $D_e$. Furthermore, it relies on access to the distribution $P_{W|D}$ and not solely on a trained model $W_l$. In contrast, an efficient unlearning mechanism conforming to Definition~\ref{def:unlearning} must define a conditional probability distribution $P_{W|W_l,T(D),D_e}$ that can be instantiated for any choice of learned model $W_l$, statistic $T(D)$ of the data, and unlearning data set $D_e$. To this end, in this section, we develop an amortized variant of variational unlearning \cite{nguyen2020variational} that enables optimization over an unlearning mechanism $P_{W|W_l,T(D),D_e}$. We refer to this approach as \textit{amortized variational unlearning} (AVU).

The proposed AVU framework constrains the unlearning mechanism $P_{W|W_l,T(D),D_e}$ to  belong to a family $\Qscr^{\AVU}$ of (parameterized) conditional distributions on $\Wscr$. AVU seeks to find the unlearning mechanism $P_{W|W_l,T(D),D_e}$ that solves the following problem
\begin{align}
 \min_{\substack{P_{W|W_l,T(D),D_e} \\ \in \Qscr^{\AVU}}} \Ebb_{P_{D,D_e} P_{W_l|D}}\Bigl[D_{\KL}(P_{W|W_l,T(D),D_e}||P_{W|D_r})\Bigr] \label{eq:variational_unlearning},
\end{align}where $P_{W_l|D}$ denote the distribution of the learned model $W_l \sim P_{W|D}$, and $P_{D,D_e}$ denote the probability distribution of the training data $D \sim P^{\otimes n}_{Z}$ and of the unlearning data set $D_e \sim P_{D_e|D}$. The conditional distribution $P_{D_e|D}$ describes a uniformly distributed stochastic selection of a subset $D_e$ of $m$ samples  from $D$. Problem \eqref{eq:variational_unlearning} aims at ensuring that the unlearning condition \eqref{eq:epsilon_unlearning_1} be satisfied on average over all training data set $D$ and unlearning data set $D_e$ for small value of $\epsilon>0$.

Following \cite{nguyen2020variational},  the optimization problem in \eqref{eq:variational_unlearning} can be equivalently formulated as 
\begin{align}
\min_{\substack{P_{W|W_l,T(D),D_e}\\ \in \Qscr^{\AVU}}}\hspace{-0.3cm} \Ebb_{P_{D,D_e} P_{W_l|D}}\Bigl[\EUBO(P_{W|W_l,T(D),D_e},P_{W|D})\Bigr] \label{eq:variational_unlearning-1},
\end{align}where the Evidence Upper BOund (EUBO) is defined as
\begin{align}
&\EUBO(P_{W|W_l,T(D),D_e},P_{W|D})\non \\=& \Ebb_{P_{W|W_l,T(D),D_e}}[\log P_{D_e|W}] \non \\& + D_{\KL}(P_{W|W_l,T(D),D_e}||P_{W|D}). \label{eq:EUBO}
\end{align}
The EUBO \eqref{eq:EUBO} comprises of two terms: $(i)$ the average positive log-likelihood of the unlearning data set $D_e$ obtained after unlearning; and $(ii)$ the deviation of the unlearning mechanism from the learning algorithm $P_{W|D}$. Intuitively, the first term should be small for effective unlearning, while the second is a regularization penalty that accounts for the residual epistemic uncertainty associated with the training algorithm.
\subsection{A PAC-Bayesian View of Variational Unlearning}
We now demonstrate that the optimization  \eqref{eq:EUBO} can be justified as an IRM obtained from the PAC-Bayesian bound in \eqref{eq:PAC-Bayesianbound}.
To instantiate the PAC-Bayesian bound in \eqref{eq:PAC-Bayesianbound} for unlearning, we note that the unlearning mechanism in Definition~\ref{def:unlearning} is a cascade of two operations: $(a)$ sample model parameter $W_l \sim P_{W|D}$ according to the learning mechanism; and then $(b)$ apply the unlearning mechanism $P_{W|W_l,T(D),D_e}$ on the learned model $W_l$. This process is subject to the random draw of data $D \sim P^{\otimes n}_Z$ and to the random selection of subset of data to be removed, $D_e \sim P_{D_e|D}$.  In line with this observation, we have the following PAC-Bayesian bound for the unlearning mechanism.
\begin{corollary}\label{thm:PACBayesian_variationalunlearning}
Let the data dependent prior be fixed as the learning mechanism $P_{W|D}$. With probability at least $1-\delta$, with $\delta \in (0,1)$, over the random draw of the data set $D \sim P^{\otimes n}_{Z}$ and the subset $D_e \subset D$ to be removed, the following inequality holds uniformly for all unlearning algorithms $P_{W|W_l,T(D),D_e}$:
\begin{align}
&\Ebb_{ P_{W_l|D}}\Ebb_{P_{W|W_l,T(D),D_e}}[-\Ebb_{P_Z}[\log P_{Z|W}]] \non \\\leq &\Ebb_{ P_{W_l|D}}\biggl[\frac{1}{m} \EUBO(P_{W|W_l,T(D),D_e},P_{W|D})\biggr]\non \\&+ \frac{1}{m}\log \frac{\bar{\xi}_{\AVU}}{\delta} ,\label{eq:PAC-Bayesian-unlearning-negativeloss}
\end{align}
where
$\bar{\xi}_{\AVU}=\Ebb_{P_{D,D_e} P_{W|D}}[\exp(m(-\Ebb_{P_Z}[\log P_{Z|W}]-(1/m)\log P_{D_e|W})].$
\end{corollary}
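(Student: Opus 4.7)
The plan is to apply Lemma~\ref{lem:PACBayesian_new} to the augmented data $(D,D_e)\sim P_{D,D_e}$, choosing the data-dependent prior, the function $A$, and the convex function $F$ so that the resulting inequality has the unlearned model's expected test log-loss on the left and the scaled $\EUBO$ on the right. First, I would set the data-dependent prior equal to the learning mechanism itself, $Q_{W|D,D_e}:=P_{W|D}$, which is a valid kernel over $(D,D_e)$ since it is constant in $D_e$. The role of the posterior in Lemma~\ref{lem:PACBayesian_new} is played by the marginal unlearning mechanism $\tilde{P}_{W|D,D_e}:=\Ebb_{P_{W_l|D}}[P_{W|W_l,T(D),D_e}]$ obtained by cascading the learning and unlearning algorithms; uniformity of the conclusion of Lemma~\ref{lem:PACBayesian_new} over the posterior will translate directly into uniformity over the choice of $P_{W|W_l,T(D),D_e}$.

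Next, I would set $A(D,D_e,W):=\bigl(-\Ebb_{P_Z}[\log P_{Z|W}],\,(1/m)\log P_{D_e|W}\bigr)$ and take $F(a,b):=m(a-b)$, which is linear and hence convex. The composition $f=F\circ A$ then equals $-m\,\Ebb_{P_Z}[\log P_{Z|W}]-\log P_{D_e|W}$, so the normalizing constant $\xi=\Ebb_{P_{D,D_e}}\Ebb_{P_{W|D}}[\exp(f)]$ of Lemma~\ref{lem:PACBayesian_new} matches $\bar{\xi}_{\AVU}$ exactly. Plugging these choices into \eqref{eq:PAC-Bayesianbound} yields, with probability at least $1-\delta$ and uniformly in $P_{W|W_l,T(D),D_e}$,
$$m\,\Ebb_{\tilde{P}_{W|D,D_e}}\bigl[-\Ebb_{P_Z}[\log P_{Z|W}]\bigr]-\Ebb_{\tilde{P}_{W|D,D_e}}[\log P_{D_e|W}]\leq D_{\KL}(\tilde{P}_{W|D,D_e}||P_{W|D})+\log(\bar{\xi}_{\AVU}/\delta).$$

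The final step is to reshape this into the form of \eqref{eq:PAC-Bayesian-unlearning-negativeloss}. The expectations under $\tilde{P}_{W|D,D_e}$ unfold as $\Ebb_{P_{W_l|D}}\Ebb_{P_{W|W_l,T(D),D_e}}[\cdot]$ by the very definition of the marginal, and for the KL term I would invoke joint convexity of the KL divergence in its first argument (Jensen's inequality applied to $\tilde{P}_{W|D,D_e}$) to obtain $D_{\KL}(\tilde{P}_{W|D,D_e}||P_{W|D})\leq \Ebb_{P_{W_l|D}}[D_{\KL}(P_{W|W_l,T(D),D_e}||P_{W|D})]$. Substituting both into the preceding display, dividing through by $m$, and recognising the definition of $\EUBO$ in \eqref{eq:EUBO} yields the stated inequality. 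The only mild obstacle is bookkeeping: Lemma~\ref{lem:PACBayesian_new} is stated for i.i.d.\ data $D\sim P_Z^{\otimes n}$, whereas the corollary requires randomness over the joint draw $(D,D_e)\sim P_{D,D_e}$. This is benign because the Donsker--Varadhan change-of-measure step underlying Lemma~\ref{lem:PACBayesian_new} never uses the product structure of the sampling distribution and therefore extends verbatim to the augmented data space $\Zscr^n\times\Zscr^m$ with joint distribution $P_{D,D_e}$.
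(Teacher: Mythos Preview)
Your proposal is correct and follows essentially the same route as the paper's proof: apply Lemma~\ref{lem:PACBayesian_new} over the augmented data $(D,D_e)$ with the prior $Q_{W|D,D_e}=P_{W|D}$ and the posterior given by the marginal $\Ebb_{P_{W_l|D}}[P_{W|W_l,T(D),D_e}]$, then use convexity of the KL divergence to push the expectation over $W_l$ outside. The only cosmetic difference is the factorisation of $A$ and $F$: the paper takes $A(D,W)=[-m\,\Ebb_{P_Z}[\log P_{Z|W}],\,\log P_{D_e|W}]$ with $F(a,b)=a-b$, whereas you put the factor $m$ into $F$ instead; since both choices yield the same composition $f=F\circ A$ and the same $\xi$, the arguments are equivalent.
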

\begin{proof}
This result is obtained from Lemma~\ref{lem:PACBayesian_new}  by selecting $A(D,W)$ as the two-dimensional vector $[-m\Ebb_{P_{Z}}[\log P_{Z|W}],$ $ \log P_{D_e|W}]$ and $F(a,b)=a-b$. Details can be found in Appendix~\ref{app:PACBayesian_variationalunlearning}.
\end{proof}

The left-hand side in \eqref{eq:PAC-Bayesian-unlearning-negativeloss} is the average test log-loss obtained by the unlearnt model. Therefore, by \eqref{eq:PAC-Bayesian-unlearning-negativeloss}, the variational unlearning mechanism introduced in \cite{nguyen2020variational} can be interpreted as minimizing an upper bound on the test log-loss over the unlearning mechanism $P_{W|W_l,T(D),D_e}$ (assuming knowledge of $P_{W|D}$). By \eqref{eq:EUBO}, this minimization is of the form \eqref{eq:PAC-bayesianbound_1} assumed by IRM problems \cite{zhang2006information}. As $\delta \rightarrow 0$, the inequality in \eqref{eq:PAC-Bayesian-unlearning-negativeloss} holds almost surely, which justifies taking the average in \eqref{eq:EUBO} over the draws of $D$ and $D_e$. It follows that the proposed AVU \eqref{eq:EUBO} can be similarly interpreted in terms of the minimization of a PAC-Bayes upper bound on the average test log-loss, and hence in terms of an IRM problem.
\section{Forgetting Lagrangian-Based Unlearning}
In this section, we first review the unlearning  framework introduced in \cite{golatkar2020eternal}, the \textit{Forgetting Lagrangian}, and show that this can also be intrepreted as an IRM obtained as a specific instantiation of \eqref{eq:PAC-Bayesianbound}.

\subsection{Forgetting Lagrangian}
Reference \cite{golatkar2020eternal} considers a stochastic learning mechanism $P_{W|D}$ that trains the model parameter vector $W$ of a deep neural network (DNN) using data set $D$. The unlearning mechanism $P_{W|W_l,T(D),D_e}$ ignores the statistic $T(D)$ and  yields a \textit{stochastic scrubbing function} $P_{W|W_l,D_e}$ that ``scrubs off'' the influence of the unlearning data set $D_e$ on the learned model $W_l \sim P_{W|D}$. 

The scrubbing function $P_{W|W_l,D_e}$ is designed so as to optimize the \textit{Forgetting Lagrangian},
\begin{align}
&\mathcal{FL}(P_{W|W_l,D_e},\lambda)=\Ebb_{P_{W|W_l,D_e}}[\Lh(W|D_r)]\non \\& +\lambda D_{\KL}(\Ebb_{P_{W_l|D}}[P_{W|W_l,D_e}]||\Ebb_{P_{W_l|D_r}}[\tilde{P}_{W|W_l}]) \label{eq:forgettinglagrangian}
\end{align} where $\lambda>0$ denotes a Lagrangian multiplier, and $\tilde{P}_{W|W_l}$ is a  an arbitrary `reference' distribution that maps the model $W_l \sim P_{W|D_r}$, obtained by retraining on the data set $D_r$, to a ``noisy" version $W \in \Wscr$. The forgetting Lagrangian in \eqref{eq:forgettinglagrangian} thus aims at finding an unlearning mechanism that $(a)$ minimizes the average training loss $\Lh(w|D_r)$ on the remaining data $D_r$; while $(b)$ ensuring that the unlearning mechanism $P_{W|W_l,D_e}$ applied on the learned model $W_l \sim P_{W|D}$ is close, in terms of KL divergence, to the reference distribution $\tilde{P}_{W|W_l}$ applied on the model $W_l \sim P_{W|D_r}$ obtained after re-training from scratch. Thus, the KL divergence term in \eqref{eq:forgettinglagrangian} ensures a ``certificate of unlearning" with respect to the reference $\tilde{P}_{W|W_l}$ in the sense of Definition~\ref{def:certificate}. Moreover, the KL divergence term can be interpreted as an upper bound on the information about the unlearning data set $D_e$ that can be read out from observing the unlearned model $W \sim P_{W|W_l,D_e}$ \cite{golatkar2020eternal}.

As discussed in Section~\ref{sec:variationalunlearning}, designing the unlearning mechanism via the forgetting Lagrangian in \eqref{eq:forgettinglagrangian} requires the optimization to be performed for each selection of the learned model $W_l$ and the data sets $D$ and $D_e$. Furthermore, it depends directly on the distribution $P_{W|D}$. Following the discussion in Section~\ref{sec:variationalunlearning}, we could address this problem by considering an \textit{amortized forgetting Lagrangian} approach so as to optimize a conditional distribution $P_{W|W_l,D_e}$ that can be instantiated for any choice of learned model $W_l$, and unlearning data $D_e$. We do not pursue this here, since reference \cite{golatkar2020eternal} shows that an approximate solution $P_{W|W_l,D_e}$ to problem \eqref{eq:forgettinglagrangian} can be found that does not require a separate optimization for all $D$ and $D_e$.
\subsection{A PAC-Bayesian view of forgetting Lagrangian}
We now show that the forgetting Lagrangian \eqref{eq:forgettinglagrangian} follows from a specific instantiation of the PAC-Bayesian bound \eqref{eq:PAC-Bayesianbound} for unlearning mechanisms.

\begin{corollary}\label{thm:PACBayesian_forgettinglagrangian}
Let the data dependent prior be fixed as $\tilde{P}_{W|D,D_e}=\Ebb_{P_{W_l|D_r}}[\tilde{P}_{W|W_l}]$. Then, for all $\beta>0$, with probability at least $1-\delta$, with $\delta \in (0,1)$, over the random draw of the data set $D \sim P^{\otimes n}_{Z}$ and the subset $D_e \subset D$ to be removed, the following inequality holds uniformly for all $P_{W|W_l,D_e}$,
\begin{align}
&\Ebb_{P_{W_l|D}P_{W|W_l,D_e}}[L(W)] \non \\&\leq \Ebb_{P_{W_l|D}}[\mathcal{FL}(P_{W|W_l,D_e},\beta^{-1})]+\frac{1}{ \beta }\log \frac{\xi_{\mathcal{FL}}}{\delta_2} ,\label{eq:PAC-Bayesian-unlearning-forgettinglagrangian}
\end{align}
where
$\xi_{\mathcal{FL}}=\Ebb_{P_{D,D_e} \tilde{P}_{W|D,D_e}}[\exp(\beta(L(W)-\Lh(W|D_r))].$
\end{corollary}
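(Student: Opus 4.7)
The plan is to instantiate Lemma~\ref{lem:PACBayesian_new} by treating the pair $(D,D_e)$ as the effective ``data'' and by taking the PAC-Bayes posterior to be the marginalized unlearned distribution $P_{W|D,D_e}:=\Ebb_{P_{W_l|D}}[P_{W|W_l,D_e}]$. Since Lemma~\ref{lem:PACBayesian_new} holds uniformly over arbitrary stochastic mappings into $\Wscr$, nothing prevents us from picking this mixture as the posterior; this is what will produce a KL between marginals in the bound, which is precisely the form that appears in the forgetting Lagrangian \eqref{eq:forgettinglagrangian}. The data-dependent prior is chosen as specified in the statement, namely $\tilde{P}_{W|D,D_e}=\Ebb_{P_{W_l|D_r}}[\tilde{P}_{W|W_l}]$, which is well defined as a function of $(D,D_e)$ because $D_r=D\setminus D_e$ is determined by $(D,D_e)$.

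I would then take $A(D,D_e,W)=\bigl[\beta L(W),\beta\Lh(W|D_r)\bigr]$ and $F(a,b)=a-b$, exactly as in the ``reference'' application of the lemma that produces \eqref{eq:PAC-bayesianbound_1}. Lemma~\ref{lem:PACBayesian_new} then yields, with probability at least $1-\delta$ over the joint draw $(D,D_e)\sim P_{D,D_e}$,
\begin{align*}
\beta\Ebb_{P_{W|D,D_e}}[L(W)]-\beta\Ebb_{P_{W|D,D_e}}[\Lh(W|D_r)] \leq D_{\KL}(P_{W|D,D_e}\|\tilde{P}_{W|D,D_e})+\log(\xi_{\mathcal{FL}}/\delta),
\end{align*}
with $\xi_{\mathcal{FL}}=\Ebb_{P_{D,D_e}\tilde{P}_{W|D,D_e}}[\exp(\beta(L(W)-\Lh(W|D_r)))]$, which matches the stated constant.

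Dividing through by $\beta$, rewriting $\Ebb_{P_{W|D,D_e}}[\cdot]=\Ebb_{P_{W_l|D}}\Ebb_{P_{W|W_l,D_e}}[\cdot]$, and pulling the KL term, which does not depend on $W_l$ after marginalization, inside the outer expectation $\Ebb_{P_{W_l|D}}[\cdot]$ gives the right-hand side
\begin{align*}
\Ebb_{P_{W_l|D}}\!\Bigl[\Ebb_{P_{W|W_l,D_e}}[\Lh(W|D_r)]+\tfrac{1}{\beta}D_{\KL}(\Ebb_{P_{W_l|D}}[P_{W|W_l,D_e}]\|\Ebb_{P_{W_l|D_r}}[\tilde{P}_{W|W_l}])\Bigr]+\tfrac{1}{\beta}\log(\xi_{\mathcal{FL}}/\delta),
\end{align*}
which is exactly $\Ebb_{P_{W_l|D}}[\mathcal{FL}(P_{W|W_l,D_e},\beta^{-1})]+\beta^{-1}\log(\xi_{\mathcal{FL}}/\delta)$ by the definition in \eqref{eq:forgettinglagrangian}.

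There is no serious technical obstacle: everything is a bookkeeping exercise of identifying the correct posterior/prior pair and the correct $A,F$. The only conceptual point worth stressing, which is the analog of what made Corollary~\ref{thm:PACBayesian_variationalunlearning} work, is that the uniformity of Lemma~\ref{lem:PACBayesian_new} over \emph{all} stochastic mappings lets us pick the marginalized unlearning distribution as the posterior; this is the reason the data-dependent prior formulation of \cite{rivasplata2020pac} is essential here, because without it the prior $\tilde{P}_{W|D,D_e}$, which depends on $D_r$ through its inner expectation, would not be admissible.
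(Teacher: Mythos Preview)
Your proposal is correct and follows essentially the same route as the paper's proof: you instantiate Lemma~\ref{lem:PACBayesian_new} over the pair $(D,D_e)$ with posterior $P_{W|D,D_e}=\Ebb_{P_{W_l|D}}[P_{W|W_l,D_e}]$, prior $\tilde{P}_{W|D,D_e}=\Ebb_{P_{W_l|D_r}}[\tilde{P}_{W|W_l}]$, and $A=[\beta L(W),\beta\Lh(W|D_r)]$, $F(a,b)=a-b$, then rewrite the result in terms of $\mathcal{FL}$. The only cosmetic difference is that the paper phrases this by first deriving the general unlearning bound \eqref{eq:imp} (in the proof of Corollary~\ref{thm:PACBayesian_variationalunlearning}) and then specializing, whereas you go directly; the substance is identical.
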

\begin{proof}
The proof follows in the same steps as the proof of Corollary~\ref{thm:PACBayesian_variationalunlearning} with $A(D,W)=[\beta L(W), \beta \Lh(W|D_r)]$. Details in Appendix~\ref{app:forgettinglagrangian}.
\end{proof}

The left-hand side of \eqref{eq:PAC-Bayesian-unlearning-forgettinglagrangian} is the average test loss, and hence the forgetting Lagrangian framework introduced in \cite{golatkar2020eternal} can be again interpreted as minimizing an upper bound on the average test loss.
\section{Conclusion}
The paper presents a unified PAC-Bayesian framework for the design of machine unlearning algorithms. We show that two unlearning design criteria studied in literature -- EUBO for variational unlearning \cite{nguyen2020variational} and Forgetting Lagrangian \cite{golatkar2020eternal} can be interpreted as IRM obtained via specific instantiation of the proposed PAC-Bayesian framework. 
\appendix
\section{Proof of Lemma~3.1}\label{app:PACBayesian_new}
The  PAC-Bayesian bound in \eqref{eq:PAC-Bayesianbound} is obtained by first using a Markov inequality, and then applying change of measure as detailed next. The  Markov inequality for a non-negative random variable $Y$ states that with probability at least $1-\delta$, with $\delta \in (0,1)$, we have $Y \leq \Ebb[Y]/\delta$. Precisely, the following inequality holds,
$$\mathrm{Pr}(Y \leq \Ebb[Y]/\delta) \geq 1-\delta. $$
We specialize the above Markov inequality to our setting by taking $Y=\Ebb_{Q_{W|D}}[\exp(f(D,W))]$. Note that $Y$ is a function of the random variable $D$, and that
$
    \Ebb_{P^{\otimes n}_Z}[Y]=\xi.
    $ Markov's inequality then gives that
\begin{align}
&\mathrm{Pr}_D\biggl( \Ebb_{Q_{W|D}}[\exp(f(D,W)] \leq \frac{\xi}{\delta}\biggr )\geq 1-\delta .\label{eq:1}
\end{align} Applying change of measure then results in the following inequality
\begin{align}
    \mathrm{Pr}_D\biggl(\forall P_{W|D}, \hspace{0.01cm} &\Ebb_{P_{W|D}}\biggl[\exp\biggl(\hspace{-0.1cm} f(D,W)-\log \frac{P_{W|D}(W|D)}{Q_{W|D}(W|D)}\hspace{-0.05cm}\biggr) \biggr]\non \\&\leq \frac{\xi}{\delta}\biggr)
    \geq 1-\delta.
\end{align} Using Jensen's inequality to take expectation inside the exponential term, and subsequently applying log on both sides of the inequality then results in
\begin{align}
   \mathrm{Pr}_D\biggl(\forall P_{W|D}, \hspace{0.2cm} &\Ebb_{P_{W|D}}[f(D,W)]-D_{\KL}(P_{W|D}||Q_{W|D}) \non \\& \leq \log \frac{\xi}{\delta}\biggr)\geq 1-\delta. 
\end{align} Finally, noting that $f(D,W)=F(A(D,W))$ where $F$ is convex, and applying Jensen's inequality again results in the PAC-Bayesian bound in \eqref{eq:PAC-Bayesianbound}.
\section{Proof of Corollary~4.1}\label{app:PACBayesian_variationalunlearning}
The required bound follows by instantiating the general PAC-Bayesian bound in Lemma~\ref{lem:PACBayesian_new} for unlearning. As such, the unlearning PAC-Bayesian bound depends on the cascade operation of learning a model $W_l \sim P_{W|D}$, and subsequent unlearning using $P_{W|W_l,T(D),D_e}$. This process is subject to the random draw of $D \sim P^{\otimes n}_Z$, and to the random selection of the subset $D_e \subset D$.  Consequently, we consider the prior in Lemma~\ref{lem:PACBayesian_new} as $Q_{W|D,D_e}$, depending on both data sets $D$ and $D_e$.

Lemma~\ref{lem:PACBayesian_new} then gives that with probability at least $1-\delta$ over the random draw of data set $D$, and that of the unlearning data set $D_e$, the following inequality holds uniformly over all distributions $P_{W|D,D_e}$,
\begin{align}
    F(\Ebb_{P_{W|D,D_e}}[A(D,W)]) &-D_{\KL}(P_{W|D,D_e}||Q_{W|D,D_e}) \non \\&\leq \log \frac{\xi}{\delta}.\label{eq:imp}
\end{align}
In particular, \eqref{eq:imp} holds for all learning mechanisms $P_{W|D}$ and unlearning mechanisms $P_{W|W_l,T(D),D_e}$ such that $P_{W|D,D_e}=\Ebb_{ P_{W_l|D}}[P_{W|W_l,T(D),D_e}]$ is the marginal of the joint distribution $P_{W_l|D} \otimes P_{W|W_l,T(D),D_e}$.

To get to \eqref{eq:PAC-Bayesian-unlearning-negativeloss}, we consider the PAC-Bayesian bound \eqref{eq:imp} for a fixed learning algorithm $P_{W|D}$. Further, we take $Q_{W|D,D_e}=P_{W|D}$, $A(D,W)=[-m\Ebb_{P_Z}[\log P_{Z|W}],$ $ \log P_{D_e|W}]$ and $F(a,b)=a-b$. Noting that $P_{W|D,D_e}=\Ebb_{ P_{W_l|D}}[P_{W|W_l,T(D),D_e}]$, we  use the biconvexity of KL divergence to upper bound
\begin{align*}
    &D_{\KL}(P_{W|D,D_e}||P_{W|D}) \non \\&\leq \Ebb_{P_{W_l|D}}[D_{\KL}[P_{W|W_l,T(D),D_e}||P_{W|D}].
\end{align*}Using all these in \eqref{eq:imp} yields the required bound in \eqref{eq:PAC-Bayesian-unlearning-negativeloss}.
\section{Proof of Corollary~5.1}\label{app:forgettinglagrangian}
The proof follows the same line as the proof of Corollary~\ref{thm:PACBayesian_variationalunlearning} in Appendix~\ref{app:PACBayesian_variationalunlearning}. To get to \eqref{eq:PAC-Bayesian-unlearning-forgettinglagrangian}, we use \eqref{eq:imp} with $P_{W|D,D_e}=\Ebb_{ P_{W_l|D}}[P_{W|W_l,T(D),D_e}]$, $Q_{W|D,D_e}=\Ebb_{ P_{W_l|D_r}}[\tilde{P}_{W|W_l}]$ and $A(D,W)=[\beta L(W),$ $ \beta \Lh(W|D_r)].$
\bibliographystyle{IEEEbib}
\bibliography{ref}

\end{document}